\newcommand{\argmin}{\mathop{\mathrm{argmin}}\limits}
\title{
Coping with Mistreatment in Fair Algorithms}
\author{
Ankit Kulshrestha\\
Computer and Information Sciences\\
University of Delaware\\
Newark, DE\\
\texttt{akulshr@udel.edu}

\And

Ilya Safro\\
Computer and Information Sciences\\
University of Delaware\\
Newark, DE\\
\texttt{isafro@udel.edu}
}
\date{February 2021}
\begin{document}
\maketitle


\begin{abstract}
Machine learning actively impacts our everyday life in almost all endeavors and domains such as healthcare, finance, and energy. As our dependence on the machine learning increases, it is inevitable that these algorithms will be used to make decisions that will have a direct impact on the society spanning all resolutions from personal choices to world-wide policies. Hence, it is crucial to ensure that (un)intentional bias does not affect the machine learning algorithms especially when they are required to take decisions that may have unintended consequences. Algorithmic fairness techniques have found traction in the machine learning community and many methods and metrics have been proposed to ensure and evaluate fairness in algorithms and data collection. 

In this paper, we study the algorithmic fairness in a supervised learning setting and examine the effect of optimizing a classifier for the Equal Opportunity metric. We demonstrate that such a classifier has an increased false positive rate across sensitive groups and propose a conceptually simple method to mitigate this bias. We rigorously analyze the proposed method and evaluate it on several real world datasets demonstrating its efficacy.

\noindent {\bf Reproducibility:} All source code, and  experimental results are available at \url{https://anonymous.4open.science/r/b6c6653c-5f50-477c-bb13-7dfdeb39d4f4/}
\end{abstract}
\section{Introduction}
\label{sec:introduction}

Machine learning has permeated almost every sphere of human endeavours. The algorithms are being applied on both macro and micro scale in various degrees in diverse fields like space~\citep{kothari2020final}, finance~\citep{dixon2020machine}. Moreover, our dependence on machine learning algorithms to take decisions has created an urgent need to ensure that the algorithms are \emph{unbiased} when taking decisions, more so if that decision affects society.  There has been a steady growth of work in algorithmic fairness in the recent times, e.g.,  \citep{ahmadian_clustering_2019,ahmadian_fair_2020,chierichetti_fair_2017,dwork_decoupled_2017,pessach_algorithmic_2020}. The underlying assumption in most of the algorithmic fairness works is that there exists a unique sensitive attribute and it influences the decision made by the algorithm. However, in real world especially healthcare the problem is not so simple. 
For instance, in the healthcare data the number of sick people is usually significantly lower than that of the healthy people \cite{dua2014machine} in many data samples.  However, this statistical difference is not just influenced by that sensitive label (a  diagnosis code in the electronic health records) or any single other such as the gender of the patient but also on whether a patient has access to medical insurance, or whether a patient would want another surgery. In this situation, it becomes hard to not just quantify a fairness metric but also design algorithms that discover and optimize such domain specific metrics.

\begin{figure*}
    \centering
    \includegraphics[width=\textwidth]{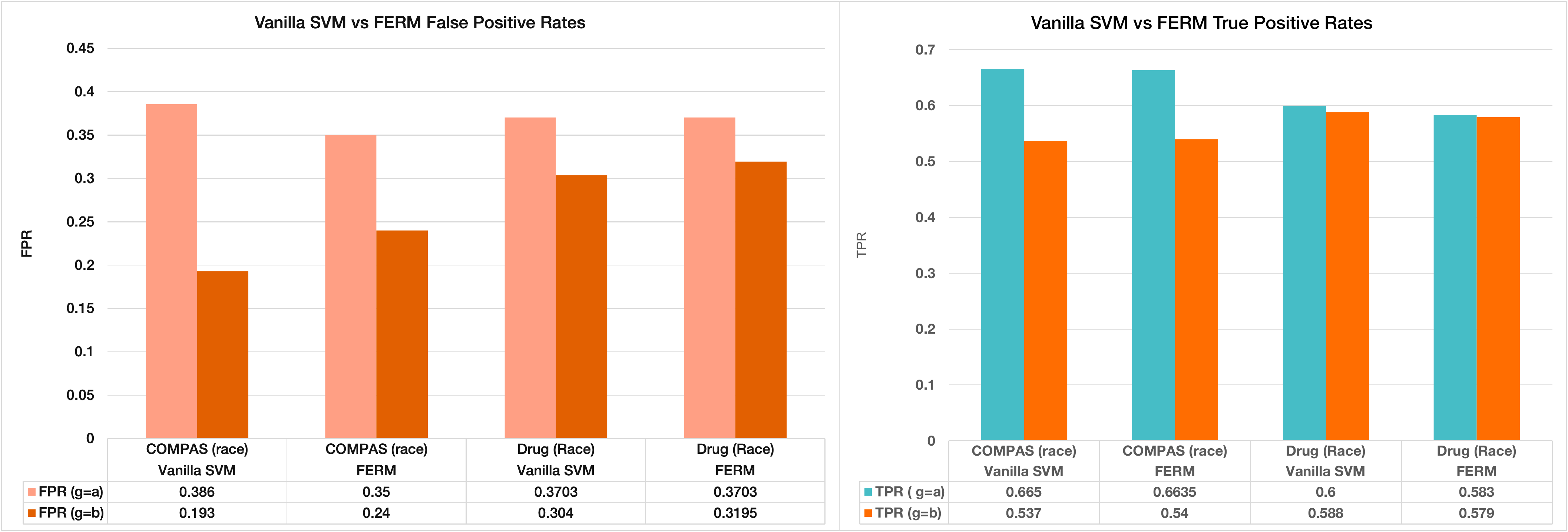}
    \caption{The True Positive Rates and False Positive Rates of Vanilla SVM vs Fair-Empirical Risk Minimization SVM. }
    \label{fig:fpr_tpr_sens}
\end{figure*}

For machine learning algorithms to be completely fair, they need to simultaneously satisfy the dual objectives of not preferring a statistical majority over a statistical minority to make a final decision and at the same time ensuring that the statistical majority is not unfairly treated to satisfy some constraint. In other words, a fair machine learning algorithm should be able to discriminate between different outcomes given a data sample without overly relying on a single sensitive attribute or a combination thereof. 

Many metrics such as Disparate Impact~\citep{Barocas2016BigDD,chierichetti_fair_2017,ahmadian_fair_2020, bera_fair_2019}, Demographic Parity~\citep{zafar_fairness_nodate, zliobaite2015relation, calders_demo_parity}, and Equal Odds/Equal Opportunity~\citep{hardt_equality_2016,donini_empirical_2020,zafar_fairness_2017} have been proposed to ensure the first fairness requirement but they haven't been studied to see if they also satisfy the second requirement as well. 

We focus on the supervised learning paradigm and assume that we have access to an underlying ``protected attribute" (we also refer to it as a ``sensitive attribute"). Two important fairness measures in this paradigm are the Equal Odds and Equal Opportunity measures~\citep{hardt_equality_2016}. The Equal Odds requires that a classifier has the same true positive rate (TPR) and false positive rate (FPR) across sensitive groups, i.e., 
 \begin{equation}
 |P[\hat{y} = 1 | s=0, y=1] - P[\hat{y} = 1 | s = 1, y=1]| \leq \epsilon
 \end{equation} 
 and 
 \begin{equation}
| P[\hat{y} = 1 | s=0, y=0] - P[\hat{y} = 0 | s = 1, y=0] | \leq \epsilon
\end{equation} 
where $\hat{y}$ is the output of the classifier, $y$ is the ground truth, $\epsilon$ is a user defined fairness criterion threshold and $s$ is an indicator variable defining the sensitive group. For our purposes, $s \in \{0,1\}$. Similarly, Equal Opportunity requires that \begin{equation}\label{eo:constr}
|P[\hat{y} = 1 | s=0, y=1] - P[\hat{y} = 1 | s = 1, y=1]| \leq \epsilon.
\end{equation} 
It has been shown in \citep{pleiss_fairness_2017} that the only way to satisfy the Equal Odds metric is to set $\epsilon = 0$. This requirement severely limits the areas in which the Equal Odds metric can be a useful measure for ensuring fairness. The Equal Opportunity can be seen as a relaxed version of Equalized Odds. When $\epsilon = 0$, Equal Opportunity requires that the true positive rates across sensitive groups are same. Figure~\ref{fig:fpr_tpr_sens} shows the true positive and false positive rates of an SVM model trained without fairness constraint and another SVM trained with Equal Opportunity (EO) constrain across the sensitive groups on two datasets - COMPAS recidivism dataset~\citep{compasdata} released by ProPublica and the Drug dataset~\citep{fehrman2017factor}. We can see that in both datasets, the false positive rates increase across the sensitive groups. Moreover, the increase in TPR of a statistical minority group comes at the expense of a statistical majority group. We name this phenomenon as \emph{disparate mistreatment} (different from~\cite{zafar_fairness_nodate}).  A classifier trying to ensure equal opportunity can make myopic decisions by inadvertently associating the sensitive group as the most important feature for final prediction. For instance, in the healthcare domain the data is often extremely imbalanced \citep{razzaghi2019predictive,razzaghi2015fast} due to such reasons as much smaller number of sick people than healthy, unequal access to medical insurance, and incorrect sampling among the people seeking for medical help. A trained classifier with Equal Opportunity metric may falsely output more positive responses on the group forming a minority in the overall disease rate. This bias would then violate our requirement on fairness.

\subsection{Our Contributions}
In this paper, we note the existence of bias exhibited by classifiers optimizing EO metric \emph{against} majority groups belonging to a single protected attribute. We make the following contributions with our work:

\begin{itemize}
    \item We concretely define the disparate mistreatment phenomenon presented above and propose a conceptually simple criteria to prevent the disparate mistreatment. 
    
    \item We rigorously show that minimizing the optimization objective subject to our constraint can generalize to unseen samples. 
    
    \item We quantify the performance of classifier in terms of the DFPR metric (difference of False Positive Rates) and show that a classifier trained with our approach exhibits a significantly less DFPR when compared against other baselines.
\end{itemize}

\section{Related Work}
\label{sec:related_work}

Work on algorithmic fairness has progressed on different areas of machine learning and different algorithms have been proposed in both supervised and unsupervised settings. In this paper we limit ourselves to work concerning statistical or group fairness. For an excellent introduction to algorithmic fairness in general, please consider~\citep{pessach_algorithmic_2020} and the references therein.

In the supervised learning paradigm, the proposed algorithms achieve fairness by either data-preprocessing, optimizing a model satisfying a particular criteria or fine tuning an unoptimal classifier to meet fairness criteria. For instance, ~\citep{hardt_equality_2016} propose the Equal Odds and Equal Opportunity fairness metric and first train an unoptimal (in the fairness sense) classifier. Then the authors fine tune the classifier keeping in mind the protected attribute of the dataset. 

In a different  approach~\citep{donini_empirical_2020} propose the FERM framework to optimize the Equal Opportunity metric by integrating the fairness constraint in an Empirical Risk Management (ERM) framework. ~\citep{zafar_fairness_2017} propose a framework to mitigate disparate mistreatment by measuring the distance of users feature vectors that are misclassified with the decision boundary. The authors then propose to mitigate disparate impact(DI)~\citep{zafar_fairness_nodate} by measuring the distance between a users feature vector and the decision boundary. We note that our definition of disparate mistreatment is different from~\citep{zafar_fairness_2017} and also note that their method assumes linear separability of data while we do not make any such assumptions about our data. Our work is more closely related to~\cite{donini_empirical_2020} since we also use the ERM framework to integrate fairness in the algorithm. 

Another direction to ensure fairness in supervised setting is proposed by~\citep{zemel_learning_nodate} in which a \emph{new} data representation is derived from the original existing data in a way that removes the protected attribute from participating directly into the decision process. An ensemble learning approach is proposed by~\citep{dwork_decoupled_2017} in which decoupled classifiers are trained and    each classifier is fine tuned jointly by minimizing a joint loss. 

Fairness and calibration are closely related. ~\citep{pleiss_fairness_2017} define calibration and rigorously show that a classifier cannot simultaneously satisfy Equalized Odds and output calibrated responses. Moreover, it has been shown that the only way to ensure Equalized Odds is to ensure that both true positive rates and false positive rates across two sensitive groups are exactly the same.

In the unsupervised learning domain, fairness has been considered within the framework of clustering. Following the work of~\citep{chierichetti_fair_2017} on fairlets, different methods have been proposed~\citep{ahmadian_clustering_2019, ahmadian_fair_2020, chhabra_fair_2020, backurs_scalable_2019} that either solve the clustering problem in a heirarchical agglomerative setting or ensuring fairness in a k-means clustering setting. The central idea in all works in this vein is to prevent an over-representation of a particular sensitive group in a cluster.

\section{Our Approach}
\label{sec:method}

We begin with the notation. Let $\mathcal{D} = \{(x_{i}, y_{i})\}^{n}_{i=1}$ be a labelled dataset of $n$ samples, where $x_i$ and $y_i$ represent the data point and label, respectively. Let $\cal{F}$ be the space of all functions that minimize a prescribed loss function $l(f(x_{i}), y_{i}),~ \forall f \in \cal{F}$. We define $f_{a}$ to be a classifier that optimizes accuracy and $f_{eo}$ to be a classifier that minimizes $l(\cdot, \cdot)$ taking into account fairness using the Equal Oportunity (EO) constraint. Further, we constrain the target classes $\mathcal{Y} \in \{-1, +1\}$. For any classifier, let
\begin{equation}
\mathbb{P}_{g}(\hat{y}, y) = P[\hat{y} | s = g, y]
\end{equation}
be the probability of predicting $\hat{y}$ for sensitive group $g$ and target $y \in \mathcal{Y}$. We further assume that there are only two classes of protected attribute hence $g \in \{a, b\}$.
We are now ready to define the disparate mistreatment phenomenon. 

\theoremstyle{definition}
\begin{definition} [Disparate Mistreatment]
\textit{A classifier $f$ is said to exhibit disparate mistreatment if and only if for groups $a, b$ an increase in $\mathbb{P}_{b}(1, 1)$ directly corresponds to a decrease in $\mathbb{P}_{a}(1,1)$ and vice versa.}
\label{def:dmt}
\end{definition}
It follows from Definition~\ref{def:dmt} that any classifier $f$ exhibiting disparate mistreatment will have an increased rate of false negatives for one statistical group and at the same time also result in an increased false positive rate in the other. For any $f \in \mathcal{F}$, optimizing for maximizing accuracy will return $f_{a}$ which tries to find the decision boundary that differentiates between positive and negative samples conditioned on the outcome alone. 

However, optimizing EO results in $f_{eo}$ which constrains the classifer to find a supoptimal decision boundary so that true positives increase across the different sensitive groups. Geometrically, this can be seen as pushing the clusters of positively labeled samples in either group closer to each other by minimizing the distance between their respective means. This optimization method however completely ignores the effect on the cluster of negatively labeled samples in either sensitive group. We hypothesize that this is the root cause of $f_{eo}$ exhibiting disparate mistreatment. 

Our approach is to introduce a \emph{minimum separation parameter} into the Empirical Risk Minimization (ERM) framework. We build on the work of ~\citep{donini_empirical_2020} due its attractive guarantees on upper bounds of the empirical risk.

\theoremstyle{definition}
\begin{definition}
\emph{Let 
\[
R^{-, g}(f) = \mathbb{E}([l(f(x_{i}), y_{i}) \mid y=-1, s=g])
\]
be the risk on negatively labeled samples for a particular sensitive group in a dataset. Then $f$ is $\rho$-minimum separated if 
\begin{equation}|R^{-, a}(f) - R^{-, b}(f)| \geq \rho.
\end{equation}}
\label{def:rho_sep_fmt}
\end{definition}

Based on Definition~\ref{def:rho_sep_fmt}, we can formulate two constrained ERM problems. In the case where we want to obtain $f$ such that the decision boundary respects a minimal separation between negative label clusters we optimize:

\begin{equation} \label{eq:rho_sep_opt}
\begin{aligned}
     \underset{f\in  \mathcal{F}}{\operatorname{\argmin}} & & R(f)\\
     \text{such that} & & |R^{-, a}(f) - R^{-, b}(f)| \geq \rho
\end{aligned}
\end{equation}

We call Problem~\ref{eq:rho_sep_opt} as \emph{Equal Treatment }(ET). In this case, we do not constrain the optimal function to ensure equal true positive rates across sensitive groups. Instead, fairness is achieved by ensuring that negative samples from any sensitive group do not get classified as positive up to the tolerance level $\rho$. We additionally require that $0 < \rho \leq 1$. However, there may be situations where we would want both EO and ET constraints to be satisfied. In those cases we optimize:

\begin{equation} \label{eq:eo_et_opt}
\begin{aligned}
        \underset{f\in  \mathcal{F}}{\operatorname{\argmin}} & & R(f)\\
        \text{such that} & & |R^{-, a}(f) - R^{-, b}(f)| \geq \rho \\ 
        & & |R^{+, a}(f) - R^{+, b}(f)| \leq \epsilon
\end{aligned}
\end{equation}

We have defined the optimization problems~\ref{eq:eo_et_opt} and ~\ref{eq:rho_sep_opt} in a general manner. We now analyze the  bounds on the performance of our algorithm. First, we define $R(f)$ to be the \emph{true risk} of the unknown underlying data distribution $P$. Further, 
\[
\hat{R}(f) = \frac{1}{n}\sum_{i=1}^{n} l_{c}(f(x_{i}, y_{i}))
\]
is the \emph{empirical risk} associated with samples $(x_{i}, y_{i})$ drawn i.i.d from $\mathcal{D}^{n}$. We call $f^{*}$ to be the Bayes optimal classifier minimizing true risk and $\hat{f}$ to be optimal classifier minimizing the empirical risk. Furthermore, we constrain the loss function $l_{c}$ to be convex. 


\begin{lemma}\label{lemm:erm_risk}
If $f \in \mathcal{F}$ be a function minimizing empirical risk subject to constraints in Problem~\ref{eq:rho_sep_opt} and the loss $l_{c}$ is convex. If $C(\mathcal{F}, \delta, n^{-})$ is the upper bound function on empirical risk,  then with probability of at least $1 - 4\delta$, 
\begin{equation}
\begin{aligned}
|R^{-, a}(f^{*}) - R^{-, b}(f^{*})| < & |R^{-, a}(\hat{f}) - R^{-, b}(\hat{f})| + \\
 & \rho +  2C(\mathcal{F}, \delta, n^{-}).
\end{aligned}
\end{equation}    
\end{lemma}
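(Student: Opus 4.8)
The plan is to adapt the uniform-convergence argument of~\citep{donini_empirical_2020} for the Equal-Opportunity constraint to the \emph{negatively}-labeled conditional subpopulations, and then to combine the two resulting deviation bounds with the triangle inequality and with the feasibility properties of $\hat f$ and $f^*$.

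First I would fix a sensitive group $g \in \{a,b\}$, write $n^-_g$ for the number of samples with $y=-1,\, s=g$, and set $n^- = \min\{n^-_a,n^-_b\}$. Since $l_c$ is convex and bounded, the loss class $\{(x,y)\mapsto l_c(f(x),y): f\in\mathcal F\}$ inherits a Rademacher-complexity bound from $\mathcal F$, so symmetrization plus McDiarmid's inequality give a function $C(\mathcal F,\delta,n^-)$ (of the usual ``complexity term $+\sqrt{\ln(1/\delta)/(2n^-)}$'' shape) with
\[
\Pr\!\Big[\sup_{f\in\mathcal F}\big(R^{-,a}(f)-\hat R^{-,a}(f)\big) > C(\mathcal F,\delta,n^-)\Big] \le \delta
\]
and the symmetric statement with $R^{-,g}$ and $\hat R^{-,g}$ interchanged, for each $g$. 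Taking a union bound over $g\in\{a,b\}$ and over the two signs yields an event $E$ with $\Pr[E]\ge 1-4\delta$ on which $|R^{-,g}(f)-\hat R^{-,g}(f)|\le C(\mathcal F,\delta,n^-)$ holds for all $f\in\mathcal F$ and both $g$; two triangle inequalities then give, on $E$ and for every $f\in\mathcal F$,
\[
\big|R^{-,a}(f)-R^{-,b}(f)\big| \le \big|\hat R^{-,a}(f)-\hat R^{-,b}(f)\big| + 2\,C(\mathcal F,\delta,n^-).
\]

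I would then specialize this to $f=f^*$. If $f^*$ violates the $\rho$-separation constraint of Problem~\ref{eq:rho_sep_opt}, then $|R^{-,a}(f^*)-R^{-,b}(f^*)|<\rho$ and, since the claimed right-hand side is at least $\rho$, we are done. Otherwise $f^*$ is feasible and hence optimal for the constrained true problem, while $\hat f$ is feasible for its empirical version, so $|\hat R^{-,a}(\hat f)-\hat R^{-,b}(\hat f)|\ge\rho$; combining the empirical gap of $f^*$ with this feasibility relation and with the bound of the previous paragraph applied to $\hat f$ replaces $|\hat R^{-,a}(f^*)-\hat R^{-,b}(f^*)|$ by $|R^{-,a}(\hat f)-R^{-,b}(\hat f)|+\rho$, which is the stated inequality.

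The concentration and union-bound steps are routine; the delicate point is the last step. Because the $\rho$-separation constraint is a \emph{lower} bound, the feasible set is the complement of a band and the optimality of $\hat f$ constrains the objective $R(\cdot)$ rather than the fairness gap itself, so one must argue via the case split on whether $f^*$ is feasible, and via the empirical feasibility of $\hat f$, that the true gap of $f^*$ cannot exceed that of $\hat f$ by more than $\rho + 2\,C(\mathcal F,\delta,n^-)$. Pinning down the direction of each inequality while only spending four $\delta$'s is where the real work lies.
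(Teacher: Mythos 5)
Your setup is sound and in places more careful than the paper's own argument: the one-sided uniform deviation bounds for the group-conditional risks $R^{-,g}$, the union bound over the two groups and two signs that accounts for the $1-4\delta$, and the two triangle inequalities yielding $\big|R^{-,a}(f)-R^{-,b}(f)\big|\le\big|\hat R^{-,a}(f)-\hat R^{-,b}(f)\big|+2C(\mathcal F,\delta,n^-)$ uniformly over $f\in\mathcal F$ all match what the paper is implicitly doing (it spends its four $\delta$'s on one-sided bounds applied to $\hat f$ and to $f^*$, which amounts to the same event). The case split on whether $f^*$ satisfies the true $\rho$-separation constraint is a correct and genuinely useful observation that the paper does not make.

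However, the feasible case --- which is the entire content of the lemma --- contains a genuine gap. You need to pass from $\big|\hat R^{-,a}(f^*)-\hat R^{-,b}(f^*)\big|$ to $\big|R^{-,a}(\hat f)-R^{-,b}(\hat f)\big|+\rho$, and the only tools you invoke are (i) feasibility of $\hat f$ for the empirical problem, i.e.\ $\big|\hat R^{-,a}(\hat f)-\hat R^{-,b}(\hat f)\big|\ge\rho$, and (ii) feasibility of $f^*$ for the true problem. Both are \emph{lower} bounds on separation gaps, and neither yields an \emph{upper} bound on the gap of $f^*$ in terms of the gap of $\hat f$: because the separation appears only as a constraint and not in the objective $R(\cdot)$, two feasible functions can have gaps $\rho$ and $1$ respectively, and optimality of $\hat f$ for the empirical \emph{risk} says nothing about which of them it is. Concretely, if $\mathcal F$ contains $f^*$ with true gap close to $1$ and a second feasible function with nearly identical risk but gap exactly $\rho$, the empirical minimizer can land on the latter, making the left-hand side close to $1$ while the right-hand side is close to $2\rho+2C$; your $4\delta$ event controls only the gap deviations, not the risk ordering, so this cannot be excluded. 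Hence the ``replacement'' asserted in your last paragraph does not follow from the stated ingredients. For what it is worth, the paper's own proof has the same defect --- it inserts $\rho$ by citing the constraint, swaps empirical for true risks via a ``valid for all $f$'' remark, and its final display actually bounds $\big|R^{-,a}(\hat f)-R^{-,b}(\hat f)\big|$ by $\big|R^{-,a}(f^*)-R^{-,b}(f^*)\big|+\rho+2C$, i.e.\ the \emph{reverse} of the stated lemma --- so the missing step is not one you could have recovered from a correct source; but as a standalone argument, your final step fails.
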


\begin{proof}

For a given upper bound function $C(\mathcal{F}, \delta, n^{-})$, the true risk and empirical risk are related as:
\begin{equation} \label{eq:risk-assoc}
 R(\hat{f}) < \hat{R}(f) + C(\mathcal{F}, \delta, n)
\end{equation}
with probability greater or equal $(1-\delta)$. 
Using Equation~\ref{eq:risk-assoc}, we can that with probability  of atleast $1 - 2\delta$, where $\delta$ is the confidence parameter that 
\begin{equation}
|R^{-,a}(\hat{f}) - R^{-, b}(\hat{f})| < |\hat{R}^{-, a}(f) - \hat{R}^{-, b}(f)| +  C(\mathcal{F}, \delta, n).
\end{equation}
But, we have defined $|\hat{R}^{-, a}(f) - \hat{R}^{-, b}(f)| \geq \rho$. Then it follows that,

\begin{equation}\label{eq:interim}
\begin{aligned}
 |R^{-,a}(\hat{f}) - R^{-, b}(\hat{f})| < & |\hat{R}^{-, a}(f) - \hat{R}^{-, b}(f)| +\\ &  C(\mathcal{F}, \delta, n^{-}) + \rho.   
\end{aligned}
\end{equation}
Essentially, Equation~\ref{eq:interim} means that the true risk of a classifier that minimizes ERM is upper bounded by the cost function and the minimal separation parameter $\rho$. However, we are interested in bounding the risk of the optimal classifier on the training data (observed samples) in terms of the Bayes optimal classifier $f^{*}$, so the following relation is applied 
\begin{equation}\label{eq:optimal-risk}
\hat{R}(f^{*}) < R(f^{*}) +C(\mathcal{F}, \delta, n)
\end{equation}
with probability greater or equal $(1-\delta)$. 
Using Equation~\ref{eq:optimal-risk}  and noticing that the right hand side of Equation~\ref{eq:interim} is valid $\forall f \in \mathcal{F}$, we can state that with probability  at least $1 - 4\delta$

\begin{equation}\label{eq:proof-result}
\begin{split}
    |R^{-, a}(\hat{f}) - R^{-, b}(\hat{f})| <  |R^{-, a}(f^{*}) - R^{-, b}(f^{*})| +\\ \rho +
    2*C(\mathcal{F}, \delta, n^{-}).
\end{split}
\end{equation}
\end{proof}

The result of Lemma~\ref{lemm:erm_risk} upper bounds the empirical risk of the optimal and can guarantee that with a small penalty of $\rho$ the empirical risk will be close to the true risk on the underlying distribution. However, the degree of closeness will depend on how fast $C(\mathcal{F}, \delta, n)$ converges to a uniform value. If $\mathcal{F}$ is finite we can use Union bound to derive the upper bound function as:

\begin{equation} \label{eq:uniform-bound}
    C(\mathcal{F}, \delta, n) = \sqrt{\frac{log(\mathcal{|F|}) + log(1/\delta)}{2n}}
\end{equation}

On the other hand, if $\mathcal{F}$ is infinite then we can make use of Rademacher bound to find an upper bound for the empirical risk. The key point to note is that $C(\mathcal{F}, \delta, n)$ decays to $0$ as $n \rightarrow \infty$ which implies that the empirical risk can approximate the true risk within a small penalty of $\rho$.

\subsection{Non-Linear Optimization with Kernels}

In this section we specify our problem when the underlying space of functions is the Reproducing Kernel Hilbert Space (RKHS). For any function $f$ in RKHS we parametrize it as $f(x) = \langle \textbf{w}, \phi(\textbf{x})\rangle$ where $w \in \mathbb{H}$ and $\phi(\textbf{x})$ is a feature map in high dimensional space. For our purpose, we assume $\phi(\textbf{x})$ is a \emph{non-linear} mapping in high dimensional space, i.e., $\phi: \mathcal{X} \rightarrow \mathbb{H}$. Since our underlying function space is RKHS, we can define a positive semidefinite kernel matrix $K: \mathcal{X}\times \mathcal{X} \rightarrow \mathbb{R}$, such that $K(x, x') = \langle \phi(x), \phi(x') \rangle$. 

Following~\citep{donini_empirical_2020}, we define  $\textbf{u}_{g}$ to be the barycenter of a negatively labeled cluster of points belonging to a sensitive group $g$: 
\begin{equation} \label{eq:diff_of_barycenters}
    \textbf{u}_{g} = \frac{1}{n^{-, g}} \sum_{i \in \mathcal{Y}^{-}, g=g}^{n} \phi(x_{i})
\end{equation}

In general, we optimize the following problem

\begin{equation}
\begin{aligned}
    \min_{\textbf{w}} & & \sum_{i = 1}^{m} l(\langle \textbf{w}, \phi(x_{i})\rangle, y_{i}) + ||\textbf{w}||_{2}^{2}\\
    \text{such that} & &   \langle \textbf{w}, \Bar{\textbf{u}} \rangle \geq \rho
\end{aligned}
\label{eq:main_optimization}
\end{equation}

In Equation~\ref{eq:main_optimization}, $\Bar{\textbf{u}} = \textbf{u}_{a}- \textbf{u}_{b}$  . We further specify this problem to a soft-margin SVM by optimizing

\begin{equation}\label{eq:soft-margin-svm}
    \begin{aligned}
        \min_{\textbf{w}} & &\textbf{w}^{T}\textbf{w} + C\sum_{i=1}^{n} \zeta_{i} \\
        \text{such that} & & \zeta_{i} \geq 0 \\
        & &  y_{i}(1-\phi(x_{i})^{T}\textbf{w}) \geq 1 - \zeta_{i} \\ 
        & & \langle \textbf{w}, \Bar{\textbf{u}} \rangle \geq \rho
    \end{aligned}
\end{equation}

To simplify solving Equation~\ref{eq:main_optimization}, we use the Representer theorem~\citep{kimeldorf1971some}  to note that $\textbf{w} = \sum_{i=1}^{n} \alpha_{i}\phi(x_{i})$, then 
\begin{equation}
\langle \textbf{w}, \phi(\textbf{x})\rangle = \sum_{i=1}^{n} \alpha_{i} \langle \phi(x), \phi(x') \rangle,
\end{equation}
which can be simply expressed as 
\begin{equation}
\langle \textbf{w}, \phi(\textbf{x})\rangle = \sum_{i=1}^{n} \alpha_{i} k(x_{i}, x'_{i}),
\end{equation}
where $k(x_{i}, x'_{i})$ is an element of the gram matrix $K$. We also write, $||\textbf{w}||_{2}^{2} = \sum_{i,j=1}^{n} \alpha_{i}\alpha_{j} K(x_{i}, x_{j})$. The optimization problem can be restated as:

\begin{equation} \label{eq:opt-prob-secondary}
    \begin{aligned}
        \min_{\alpha} & & \sum_{i=1}^{n} l(\alpha_{i} * K(x_{i}, x'_{i}), y_{i}) + \sum_{i,j=1}^{n} \alpha_{i}\alpha_{j} K(x_{i}, x_{j}) \\ 
        \text{such that} & & \sum_{i=1}^{n} \alpha_{i}\left[\frac{1}{n^{-, a}}\sum_{y \in \mathcal{Y}^{-}}^{n} K_{iy} - \frac{1}{n^{-, b}}\sum_{y \in \mathcal{Y}^{-}}^{n} K_{iy}\right] \geq \rho
    \end{aligned}
\end{equation}

\textit{Geometric Intuitions}: We stated earlier that $\textbf{w} \in \mathbb{H}$ where $\mathbb{H}$ is the Hilbert space. Since Hilbert space is a complete vector space, we can interpret any $\langle \textbf{x}, \textbf{y} \rangle$ as a distance between two infinite dimensional vectors embedded in that space. Moreover, since the underlying function space is a RKHS, any functional will be upper bounded by a finite quantity. If we solve Equation~\ref{eq:main_optimization} without the constraint, then we simply are choosing a weight vector which is closest to points $\phi(x_{i})$ that minimize the loss. However, our added constraint forces the algorithm to search for a weight vector that minimizes the loss \emph{while} maintaining a distance of at-least $\rho$ between the center of masses of points that have negative labels associated with them.

\section{Experimental results}
\label{sec:experiments}

\begin{table}[t]
    \centering
    \resizebox{\columnwidth}{!}{
    \begin{tabular}{l c c c}
    \toprule
        Dataset  & \# Data points & \# Features & Sensitive Attribute  \\
    \midrule
        Adult & $32561$/$16281$ & $12$ & Gender  \\
        COMPAS & $6171$ & $10$ & Race \\ 
        Drug & $1889$ & $11$ & Race  \\ 
        Arrhythmia & $452$ & $279$ & Gender \\
    \bottomrule
    \end{tabular}
    }
    \caption{Statistics of Real World Datasets.}
    \label{tab:dataset_stats}
\end{table}
We now describe our experimental results on several real-world datasets and show the effectiveness of our approach in reducing the FPR across the sensitive groups. Table~\ref{tab:dataset_stats} summarizes the datasets.

\noindent \textbf{Adult} is a dataset from UCI repository \citep{Dua:2019} containing 14 features with 48842 data points about different aspects of an individual (such as age, gender, and marital status) to determine if that person earns more than \$50,000 a year. For this dataset we consider \emph{gender} to be the sensitive attribute. 

\noindent \textbf{COMPAS} Correctional Offender Management Profiling for Alternative Sanctions (COMPAS) algorithm~\citep{compasdata} used by Broward County Prison to assess the re-arrest rate of different individuals was shown to be biased against black defendants. We used the dataset \citep{compasdata} provided by Propublica and specifically focused on the violent-recidivism subset. The dataset contains 6,171 data points with 12 features. We chose \emph{race} as the sensitive feature.

\noindent \textbf{Drug} is a dataset provided by UCI \citep{Dua:2019} containing information about 1889 individuals. Each individual has 12 features associated with them including personality type, ethnicity and whether they have used a drug in the past. There are 16 possible classification problems concerning use of different drugs. In this paper we select the use of cocaine as our target variable and discretize it into ``Never used'' and ``Used'' classes. We further discretize the sensitive attribute \emph{ethnicity} into ``White" and ``Other" classes.

\noindent \textbf{Arrhythmia} is a dataset containing 452 samples and 279 features to detect the absence or presence of cardiac arrhythmia in a patient. We also turn this into a binary classification problem by considering ``Normal" vs ``Others" as the outcome variable. We choose \emph{gender} as the protected attribute. Since this data contains a lot of missing values we pre-processed the data by dropping feature with number of unknowns greater than a threshold and substituting the rest of the unknowns with the median value of the feature. This dataset is also provided by the UCI~\citep{Dua:2019} repository. 

We use the RBF kernel 
\[
K(\textbf{x},\textbf{y}) = e^{\gamma ||\textbf{x}-\textbf{y}||^{2}}
\]
for all experiments. All real valued features are scaled to have a zero mean and unit standard deviation. We further vectorize all categorical variables as well. In datasets where the test/train split is not provided we search for the optimal hyperparameters ($C, \gamma$) using 5-fold cross validation for each given value of $\rho$ and report the best result.  For baseline, we use the code provided by~\citep{donini_empirical_2020} with our pre-processed data.
In all tables containing the results, our method is labeled by SVM-MT, the original FERM and SVM by FERM and SVM, respectively. 

\begin{table*}[]
    \centering
    \setlength{\tabcolsep}{.3em}
    \begin{tabular}{lcccccccc}
    \toprule
     Method   & 
    \multicolumn{2}{c}{Adult} & 
    \multicolumn{2}{c}{COMPAS} & 
    \multicolumn{2}{c}{Drug} & 
    \multicolumn{2}{c}{Arrhythmia}\\
    \cmidrule(lr){2-3}
    \cmidrule(lr){4-5}
    \cmidrule(lr){6-7}
    \cmidrule(lr){8-9}
    {} & 
    Precision & DFPR &
    Precision & DFPR &
    Precision & DFPR &
    Precision & DFPR \\
    \midrule
    SVM  & 0.74 & 0.086 & $0.668 \pm 0.009$ & $\textbf{0.217}\pm 0.058$ & $0.672 \pm 0.027$ & $0.098 \pm 0.066$ & $0.747 \pm 0.061$ & $\textit{0.158} \pm 0.103$ \\ 
    SVM-MT & 0.73 &  \textbf{0.020} & $0.664 \pm 0.023$ & $\textit{0.225} \pm 0.058$ & $0.625 \pm 0.017$ & $\textbf{0.033} \pm 0.026$ & $0.703 \pm 0.072$ &  $\textbf{0.056} \pm 0.04$ \\ 
    FERM  & 0.71 & \textit{0.067} & $0.667 \pm 0.015$ & $0.25 \pm 0.053$ & $0.627 \pm 0.023$& $\textit{0.069} \pm 0.078$ & $0.742 \pm 0.048$ & $0.21 \pm 0.124$ \\
    \end{tabular}
    \caption{A comparison of results on four datasets with baseline FERM method and optimizing for minimal separation. Best results are shown in bold and second best are in italics.}
    \label{tab:results_main}
\end{table*}


In order to measure the performance of the classifier after optimization we define the \emph{DFPR} metric to be the absolute difference between the FPR across the sensitive groups. Our search for optimal $\rho$ was performed by first optimizing the SVM specific hyperparameters via cross fold validation. When we do not explicitly balance the dataset, then we sweep the values of $\rho$ in $[0.1,\dots, 0.5]$ in steps of $0.1$ to determine the optimal value that minimizes the DFPR. 

In our experiments, we did not notice a significant decrease of DFPR value for $\rho > 0.5$. We explain this as follows: when $\rho$ is higher then a maximum margin classifier would effectively optimize for accuracy. Table~\ref{tab:results_main} shows the results of our experiments on four real world datasets. \emph{We observe that our method consistently outperforms the baseline FERM~\citep{donini_empirical_2020} method achieving significantly lower DFPR on almost all datasets without sacrificing too much precision}.

\begin{table}[h]
    \centering
    \begin{tabular}{lcccc}
    \toprule
        Model & Adult & COMPAS & Drug & Arrhythmia \\
    \midrule
        SVM &  $1 \pm 1.5$ & $3 \pm 1$ & $1\pm 1$ & $0.2 \pm 0.5$ \\ 
        SVM-MT & $7 \pm 1 $ & $121\pm 1$ & $10\pm 1$& $1\pm0.6$ \\ 
        FERM & $10 \pm 1$ & $136\pm 1$& $16\pm 1$& $2 \pm	1$ \\ 
    \bottomrule
    \end{tabular}
    \caption{Model optimization times on different datasets. The results are an average of 5 different runs. All times except for Adult dataset are in seconds.  Adult dataset time is measured in minutes.}
    \label{tab:timing_results}
\end{table}
For the completeness of comparison we also trained SVM models  without any constraints. As expected, the unconstrained SVM has a higher accuracy, but also a higher rate of FPR than any other models. In particular, the FERM model has a comparable precision with the baseline SVM but has a much higher DFPR rate on the Arrhytmia dataset, while on the other hand our method sacrifices some precision to achieve the lowest DFPR rate of $0.05$ with $\rho=0.1$. Table~\ref{tab:timing_results} shows the average optimization time over 5 different runs for all three models considered in our study. As expected, an unconstrained baseline SVM is significantly faster than the other two methods in finding the decision boundary. However, compared against the FERM model, our approach is consistently faster. In the case of Adult dataset optimizing both FERM and SVM-MT takes considerably longer than the baseline SVM. This indicates that for larger datasets a more scalable algorithm needs to be designed.

\begin{figure}
    \centering
    \includegraphics[width=\linewidth]{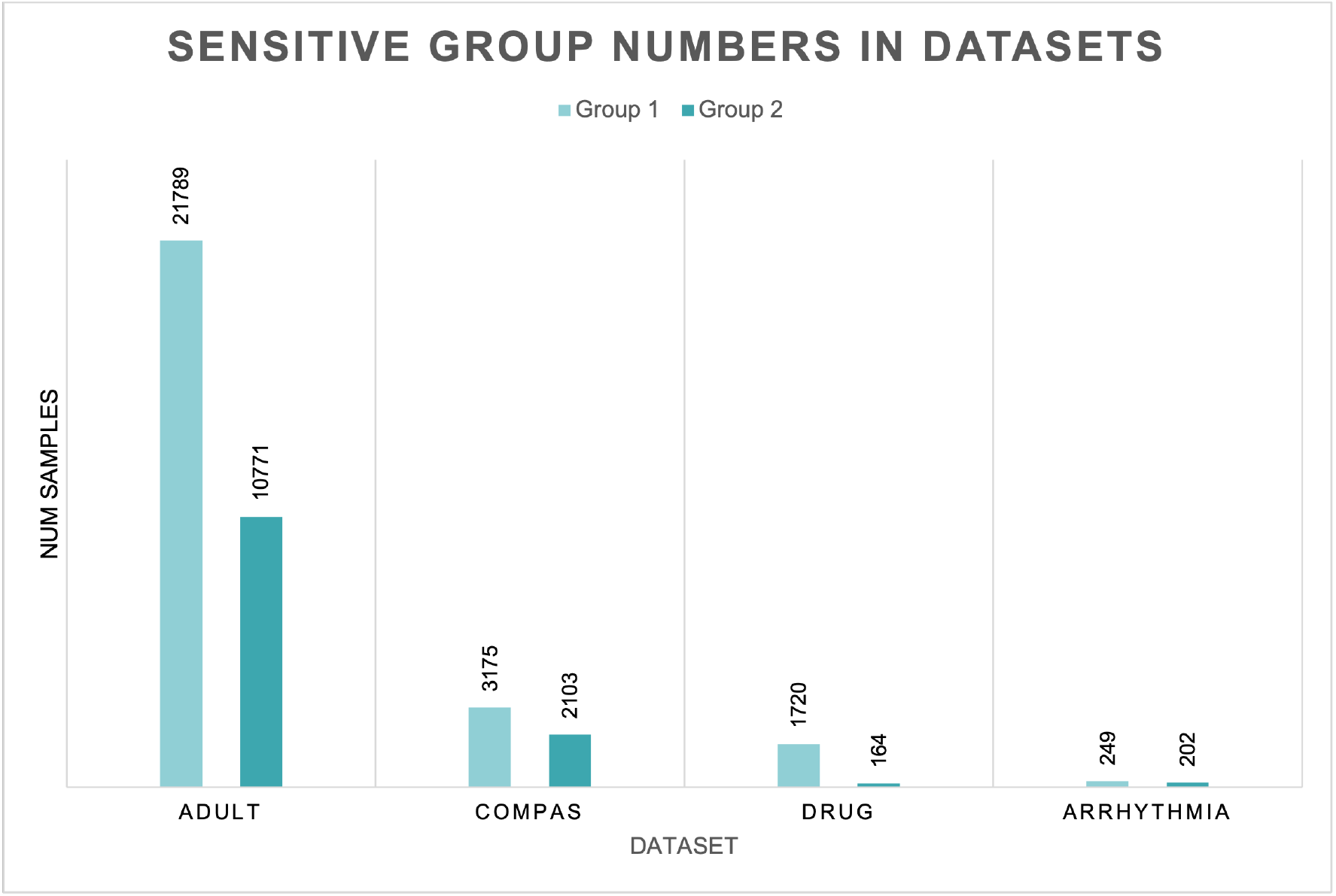}
    \caption{Statistics about sensitive groups in real world datasets.}
    \label{fig:dataset_stats}
\end{figure}

\begin{table}[ht]
    \centering
    \resizebox{\columnwidth}{!}{
    \begin{tabular}{lcccc}
    \toprule
        Model &  \multicolumn{2}{c}{Drug} & \multicolumn{2}{c}{Arrhythmia} \\
        \cmidrule(lr){2-3}
        \cmidrule(lr){4-5}
         &  Precision & DFPR & Precision & DFPR\\
    \midrule
        SVM  &  $0.672 \pm 0.028$ & $0.12 \pm 0.087$ & $0.744 \pm 0.05$  & $0.251 \pm 0.135$ \\ 
        SVM-MT &  $0.661 \pm 0.032$ & $0.074 \pm 0.04 $& $0.703 \pm 0.03$ & $0.09 \pm 0.06$\\ 
        FERM  & $0.648\pm 0.017$& $0.102 \pm 0.05$& $0.735 \pm 0.03$ & $0.121 \pm 0.07$\\ 
    \bottomrule
    \end{tabular}
    }
    \caption{Results of experiments with undersampling the majority label class on Drug and Arrhythmia datasets. The results are an average of 5 different runs.}
    \label{tab:sampling_expts}
\end{table}

\subsection{Handling Imbalanced Classes}

The real world datasets have a class imbalance since data collection is not always uniform. For instance, in the Drug dataset there are 784 people belonging to sensitive group (``Other") that are labeled as a positive class. By contrast, only 63 people belonging to the other sensitive group(``White") are positively labeled. The same trend is also observed in other datasets. In order to evaluate the algorithms more fairly, we wanted to remove the imbalance in the class labels. Hence, we undersampled the numerically superior class and ran the optimization against all three methods i.e. baseline SVM, SVM-MT(our method) and FERM. Table~\ref{tab:sampling_expts} shows the results of our experiments. All the results are the average of 5 independent runs. We found $\rho = 0.1$ to be a stable choice for all the experiments. 

In the case of Drug dataset, FERM model has a significantly reduced precision, while our method has almost comparable precision to the baseline. Moreover, our DFPR is the lowest amongst all three methods. Given that the drug dataset exhibits a strong class imbalance and FERM optimizes to minimize the TPR, undersampling the majority  decreases the precision. In the Arrhythmia dataset we also note that our method achieves the best DFPR. However, for the FERM model the precision drops less due to the fact that difference between positively labeled samples is not as pronounced (160 in one sensitive group and 85 in the other). We also observe that our method does exhibit a lower precision on the Arrhythmia dataset which we interpret to be because of the low number of samples in the data and our objective to keep the negatively labeled samples separated by atleast $\rho$. 


\section{Conclusion}
\label{sec:conclusion}

In this paper we have studied the effects of optimizing a classifier under the Equal Opportunity constraint. We showed that a classifier trying to minimize the difference between true positive rates of sensitive groups causes an increase in the false positive rates within each group. Furthermore, the increase in true positive rate of the statistical minority is achieved by the decrease in the true positive rate of the statistical majority. While this may be a valid method to ensure fairness with a single protected attribute, we reasoned that it would be counterproductive if the sensitive attribute could not be precisely defined. To mitigate this mistreatment we proposed a minimum separation parameter and showed that our approach is successful in decreasing the false positive rates across the sensitive groups without sacrificing too much accuracy. 

Our work opens up a discussion about how far the assumption of a unique sensitive attribute in data is valid. A fruitful direction of work can be to examine the existing fairness metrics when the sensitive attribute is not precise. Another direction that opens up is to define metrics that are robust to imprecise sensitive attributes and create algorithms that make minimal assumptions about the underlying data. Another important direction is related to introducing the Equal Opportunity constraint into the recent scalable accelerators of nonlinear support vector machines such as  \citep{wen2018thundersvm,sadrfaridpour2019engineering} which are expected to provide a notable scalability improvement to the entire optimization process.

\noindent {\bf Reproducibility:} All source code, and  experimental results are available at \url{https://anonymous.4open.science/r/b6c6653c-5f50-477c-bb13-7dfdeb39d4f4/}

\bibliography{fairness_ml}

\end{document}